\newtheorem{Alg}{Algorithm}
\newtheorem{Pro}{Proposition}
\newtheorem{Lem}{Lemma}
\begin{document}

\begin{frontmatter}



\title{Nonlocal Patches based Gaussian Mixture Model for Image Inpainting}


\author{Wei Wan, Jun Liu*}

\address{Laboratory of Mathematics and Complex Systems (Ministry of Education of China), School of Mathematical Sciences, Beijing Normal University, Beijing, 100875, People's Republic of China}
\ead{weiwan@mail.bnu.edu.cn, jliu@bnu.edu.cn}

\begin{abstract}
We consider the inpainting problem for noisy images. It is very challenge
to suppress noise when image inpainting is processed.
An image patches based nonlocal variational method is proposed to simultaneously inpainting and denoising in this paper.
Our approach is developed on an assumption that the small image patches should be obeyed a distribution which can be described by a high dimension Gaussian Mixture Model. By a maximum a posteriori (MAP) estimation, we formulate a new regularization term according to the log-likelihood function of the mixture model. To optimize this regularization term efficiently, we adopt the idea of the Expectation Maximum (EM) algorithm. In which, the expectation step can give an adaptive weighting function which can be regarded as a nonlocal
connections among pixels. Using this fact, we built a framework for non-local image inpainting under noise.
Moreover, we mathematically prove the existence of minimizer for the proposed inpainting model.
By using a spitting algorithm, the proposed model are able to realize image inpainting and denoising simultaneously.
Numerical results show that the proposed method can produce impressive reconstructed results when the inpainting region is rather large.
\end{abstract}

\begin{keyword}
Image inpainting \sep Non-local methods \sep Patch based methods \sep EM algorithm \sep Statistical methods \sep Variational methods


\end{keyword}

\end{frontmatter}


\section{Introduction}\label{Introduction}

Image inpainting aims to reconstruct the information for the damaged or occluded regions by using the observed information, which is an active topic in image processing and computer vision.
Image degradation is caused by image damage in the process of acquisition, transmission, storage or processing. In addition, image inpainting can also remove certain designated areas in the image according to human purposes, such as text removal and special effects.
Mathematically, image inpainting problem can be described as: suppose that $\Omega$ denotes the image region, $O\subset\Omega$ is the hole or inpainting region, and $O^c:=\Omega\setminus O$ stands for the available information region. $u:\Omega\rightarrow \mathbb{R}$ is the latent clear image, and $v:\Omega\rightarrow \mathbb{R}$ is the observed image. The purpose of image inpainting is to restore the original image $u$ from the observed image $v$.
Actually, it is an interpolation problem from a mathematical viewpoint. But to be different from interpolation, the discontinuity such as image edges and textures preserving is the main difficulty of this problem.

The image inpainting approaches which have been proposed in recent years mainly include: variational and partial differential equation (PDE) based methods and block based methods.
The basic idea of variational and PDE based methods is to extend the structure present in the area surrounding the missing region.
In \cite{Bertalmio2005Image}, Bertalmio \textit{et al.} first proposed an image inpainting model based on third-order PDE (known as BSCB). This approach aims to propagate the information smoothly from the surrounding region into the inpainting region along the direction of the isophotes (i.e. level lines of equal gray values).
In addition, inspired by the classical \textit{total variation }(TV) denoising model \cite{Rudin1992Nonlinear}, Chan and Shen \cite{Chan2001Mathematical} proposed the TV inpainting model. However, it is limited by the size of the inpainting region and does not satisfy the Connectivity Principle. Furthermore, the \textit{Curvature-Driven Diffusions} (CDD) inpainting model was developed by Chan and Shen \cite{Chan2001Nontexture}, which overcomes the connectivity problem by introducing the curvature of the isophotes. In \cite{Chan2002Euler}, Chan, Kang and Shen studied a variational inpainting model based on Euler's elastica energy and analyzed the connections to BSCB model and CDD model. In addition, some other typical variational and PDE based methods are the Mumford-Shah-Euler inpainting model \cite{Esedoglu2002Digital}, the inpainting methods based on the Cahn-Hilliard equation \cite{Bertozzi2007Inpainting} \cite{Burger2009Cahn}, and inpainting with the TV-stokes equation \cite{Tai2007Image} and so on. In general, these methods show good performance in geometry images (also known as cartoon or structure images) with small image gaps. However, they can not address the texture inpainting when the missing 
regions in an image are large.

Image patches based methods try to fill in the missing region by copying blocks from the known regions. When searching for a matching block, the whole image can be scanned. Thus block based methods are non-local, which can effectively fill in the texture regions and repetitive structures. In recent years, block based non-local methods have become popular for image restoration \cite{Jun2012A, Hui2013Robust, Katkovnik2010From, Kawai2009Image, Wexler2004Space, Aujol2008Exemplar, Cao2009Geometrically}.
For image inpainting, Demanet \textit{et al.} \cite{Demanet2003Image} proposed that block based inpainting methods can be regarded as finding a correspondence map $F:O\rightarrow O^c$. Thus each pixel value $x \in O$ can be computed by $u(x):=u(F(x))$.
In \cite{Criminisi2004Image}, Criminisi \textit{et al.} developed an exemplar based algorithm by considering inpainting order, i.e. patch priority, in which texture and structure information are propagated simultaneously.
In \cite{Li2014A}, Li \textit{et al.} proposed a universal algorithm framework called iterative decoupled inpainting (IDI) for image inpainting.
Then patch based regularization operator such as BM3D transform can be used in this method.
Meanwhile, a block based inpainting model using group sparsity and TV regularization was considered by Wan \textit{et al.} \cite{weiwan2018}. The updated local SVD operators are effective in promoting the sparse representation and play the role of dictionary learning.
In \cite{Arias2011A}, a variational framework for block based image inpainting was proposed by Arias \textit{et al.}. Their proposed energy functional is
\begin{align}\label{eq:E1}
{
E_1(u,w)= \int_{\widetilde{O}} \int_{\widetilde{O}^c}w(x,y)\varepsilon(u_B(x)-u_B(y))dydx + h\int_{\widetilde{O}} \int_{\widetilde{O}^c} w(x,y)\ln w(x,y)dydx},
\end{align}
where $w:\widetilde{O}\times \widetilde{O}^c\rightarrow \mathbb{R}$ is a similarity weight function that measures the similarity between patches centered in $\widetilde{O}$ and $\widetilde{O}^c$. The patch error function $\varepsilon$ is defined as
$$
\varepsilon(u_B(x)-u_B(y)):=\int_{B_r}g(z)[u(x+z)-u(y+z)]^2dz,
$$
when the weighted squared $L^2$-norm (i.e. the patch non-local means scheme) is taken and $g$ is a Gaussian function.
The negative of the second term represents the entropy regularization, however, the authors did not give it a mathematical interpretation. In \cite{Liu2017A}, Liu \textit{et al.} proposed a block based non-local maximum a posterior estimation (MAP) framework for image denoising. By utilizing the well-known expectation maximum (EM) algorithm, they developed a block diffusion based BNL$H^1$ model:
\begin{align*}
E_2(u,w)=\frac{\lambda}{2}\int_{\Omega}(u(x)-v(x))^2dx+h\int_{\Omega}\int_{\Omega}w(x,y)\ln w(x,y)dydx+(H^1)_w^g(u),
\end{align*}
where
$$
(H^1)_w^g(u)=\int_{\Omega}\int_{\Omega}\int_{B_r}g(z)[u(y+z)-u(x+z)]^2w(x,y)dzdydx.
$$
The first term is the fidelity term used to measure the difference between the noisy image $v$ and clear image $u$. The second and third terms can be regarded as regularization terms derived from EM algorithm. Here we extend our previous block based denoising work \cite{Liu2017A} to image inpainting problem and present a new statistical interpretation for the entropy regularization.

The majority of image inpainting work assumes the original images are not polluted by noise. In practice, they are usually polluted by noise for various unavoidable reasons.
In this paper, we assume the inpainting image is destroyed with larger missing region and also polluted by Gaussian white noise.
Obviously, this problem is more challenging than traditional image inpainting.
We propose a novel block based non-local inpainting model which is derived from MAP estimation. By the assumption of the self similarity of small patches, we construct a nonparametric mixture model to describe their prior probability density function. In order to optimize this regularization term derived from the mixture model efficiently, we employ the idea of EM algorithm and give a variational framework for non-local image inpainting. We mathematically prove the existence of minimizer for the proposed model. Moreover, this proposed model combines image inpainting and denoising process into a unified framework. It is a  generalization of the traditional block based image inpainting model \eqref{eq:E1}. By using multiscale technique appeared in \cite{Arias2011A}, the proposed model are able to produce better results when the inpainting region is rather large. We propose a decoupling algorithm and experimental results show that it can lead to some impressive image restoration results.

The rest of the  paper is organized as follows. In Section \ref{TheProposedMethod}, we propose the block based non-local inpainting model. The existence of minimizer for the proposed model is proved in Section \ref{ExistenceofMinimizer}. In Section \ref{Algorithms}, we present the algorithm and some details about the implementation. Experimental results are performed in Section 5. Finally, we conclude this work in Section 6.

\section{The Proposed Method}\label{TheProposedMethod}
\subsection{Block Based Maximum a Posteriori (MAP) Estimation}
We assume that the original image $u$ is polluted by Gaussian noise, i.e. $v(x)=u(x)+\eta(x)$, $\eta(x)\sim N(0,{\sigma}^2)$, $x \in O^c$. Denote $B_r$ as a small symmetrical neighborhood centered at $(0, 0)$, i.e. $B_r=\{x:\|x\|\leq r\}$. Thus, a small image block of $u$ centered at $x$ is denoted by $u_B(x)=\{u(x+z):z \in B_r\}$. Furthermore, we take the extended inpainting region $\widetilde{O}$ as the set of centers of blocks that intersect the inpainting region $O$, i.e. $\widetilde{O}:=O+B_r=\{x \in \Omega:(x+B_r)\bigcap O\neq \emptyset\}$. This will help in transferring available information, which is equivalent to the boundary condition of PDE.
\begin{figure}[h]
  \begin{center}
  \includegraphics[width=.5\linewidth]{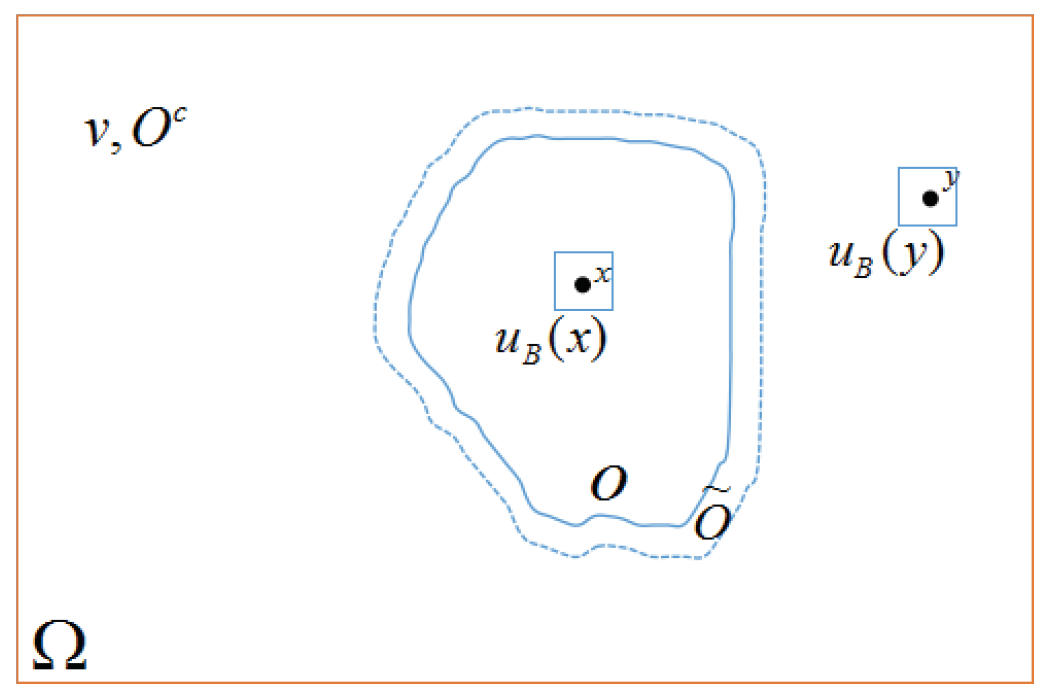}
  \end{center}
  \caption{Inpainting problem.}
\label{figu1}
\end{figure}

According to MAP estimation problem, we can get
\begin{align*}
u^*=\arg\max\limits_{u} p(u_B|v_B)= \arg\max \limits_{u}\frac{p(v_B|u_B)p(u_B)}{p(v_B)}.
\end{align*}
Once the observed image $v$ is given, $p(v_B)$ will be known. Therefore, the MAP problem is equivalent to
\begin{align}\label{2.1}
u^*=\arg\min\limits_{u}\{-\ln p(v_B|u_B)-\ln p(u_B)\}.
\end{align}

Since $v(x)=u(x)+\eta(x),\eta(x)\sim N(0,{\sigma}^2),x \in O^c$, one can easily find
$$
p(v_B(x)|u_B(x))\propto \exp\large\left(-\frac{(u(x)-v(x))^2}{2\sigma^2}\large\right),
$$
where $\sigma$ is the standard deviation of the Gaussian noise. Then according to the independent identify distribution assumption for $\eta(x),x \in \widetilde{O}^c$, we have
$$
p(v_B|u_B)=\prod_{x \in \widetilde{O}^c}p(v_B(x)|u_B(x))\propto\prod_{x \in \widetilde{O}^c}\exp\large\left(-\frac{(u(x)-v(x))^2}{2\sigma^2}\large\right).
$$

Next we apply block based approach to compute a priori term of $u$. Once $u$ is estimated, the histogram of all the block $u_B(y),y \in \widetilde{O}^c$ can be calculated by
$$
\sum_{y \in \widetilde{O}^c}\delta(\mathbf{z}-u_B(y)).
$$
Where $\delta$ is a vector-valued function which counts the number of blocks whose structures are the same as $\mathbf{z}$, i.e.
\begin{align*}
\delta(\mathbf{x})=\begin{cases}
1,&\mathbf{x}=0\\
0,&else
\end{cases}.
\end{align*}

For calculation convenience, usually $\delta$ can be approximated by a Gaussian function
$$
\delta_h(\mathbf{x})=\frac{1}{(\pi h)^\frac{|B|}{2}}\exp \large\left(-\frac{\|\mathbf{x}\|^2}{h}\large\right),
$$
where $h>0$ is a parameter which controls the precision of this approximation.

Assume that all the clear blocks $u_B(x),x \in \widetilde{O}^c$ are some realizations of the random vector $\mathbf{z} \in \mathbb{R}^d (d=2r\times 2r)$ with the probability density function
$$
p(\mathbf{z})=\frac{1}{|\widetilde{O}^c|}\sum_{y \in \widetilde{O}^c}\delta_h(\mathbf{z}-u_B(y)),
$$

Then, from the independent identify distribution assumption, we have
$$
p(u_B)=\prod_{x \in \Omega}p(u_B(x))=\frac{1}{|\widetilde{O}^c|^{|\Omega|}}\prod_{x \in \Omega}\sum_{y \in \widetilde{O}^c}\delta_h(u_B(x)-u_B(y)).
$$

Therefore, ignoring any constant terms, the problem \eqref{2.1} becomes
\begin{align}\label{2.2}
u^*=\arg\min\limits_{u}\biggl\{E(u)=\frac{1}{2\sigma^2}\sum_{x \in \widetilde{O}^c}(u(x)-v(x))^2-\sum_{x \in \Omega}\ln\sum_{y \in \widetilde{O}^c}\delta_h(u_B(x)-u_B(y))\biggl\}.
\end{align}
\subsection{Expectation Maximum (EM) Process and Weighting Function}
Since the second regularization term contains the sum in the log function, which is a log-likelihood function for a Gaussian mixture model, it is not easy to optimize $E(u)$ directly. For convenience, we write
$$
E_1(u)=-\sum_{x \in \Omega}\ln\sum_{y \in \widetilde{O}^c}\delta_h(u_B(x)-u_B(y)).
$$

We observe that $E_1(u)$ is a standard parameters estimation problem of mixture model, thus we can use the well-known expectation maximum (EM) algorithm to solve this problem efficiently. We assume that all the small image blocks $u_B(x),x \in \widetilde{O}^c$ can be divided into $|\widetilde{O}^c|$ class. Now we introduce a latent integer random variable $\mathbf{C}=\{\mathcal{C}_1,\mathcal{C}_2,\cdots,\mathcal{C}_{|\widetilde{O}^c|}\}$ as the classification. Then, we use a function $w(x,y)$ to represent the probability of each block $u_B(x),x \in \Omega$ belongs to $y$-th class. According to the standard EM method, the expectation step can be written as (see Appendix)\\
E-Step:
\begin{align}\label{2.3}
w^{n}(x,y)=P(\mathcal{C}_i=y|u_B^n(x))=\frac{\delta_h(u_B^n(x)-u_B^n(y))}{\sum\limits_{y \in \widetilde{O}^c}\delta_h(u_B^n(x)-u_B^n(y))}.
\end{align}
M-Step:
\begin{align}\label{2.4}
u^{n+1}=\arg\min_{u}\biggl\{-\sum_{x \in \Omega}\sum_{y \in \widetilde{O}^c}\ln(\delta_h(u_B^n(x)-u_B^n(y)))w^{n}(x,y)\biggl\}.
\end{align}

Therefore, we will get a minimizer of $E_1(u)$ by alternatively calculating E-step and M-step until the stopping criteria are met. Hence, EM algorithm reduces computational complexity of minimizing $E_1(u)$. In addition, we can also minimize $E_1(u)$ directly using the following lemma, which is equivalent to EM algorithm.\\
\begin{Lem}(Commutativity of log-sum operations) Given a function $f(x,y)>0$, we have
$$
-\sum_{x \in \Omega}\ln\sum_{y \in \widetilde{O}^c}f(x,y)=\min_{w \in \mathbb{W}}\biggl\{-\sum_{x \in \Omega}\sum_{y \in \widetilde{O}^c}\ln f(x,y)w(x,y)+\sum_{x \in \Omega}\sum_{y \in \widetilde{O}^c}w(x,y)\ln w(x,y)\biggl\},
$$
where $\mathbb{W}=\biggl\{w:\Omega\times \widetilde{O}^c\rightarrow \mathbb{R}:0\leq w(x,y) \leq 1,\sum\limits_{y \in \widetilde{O}^c}w(x,y)=1,\forall x \in \Omega\biggl\}$.\\
\end{Lem}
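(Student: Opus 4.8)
The plan is to exploit that the only constraint coupling the variables, namely $\sum_{y\in\widetilde{O}^c}w(x,y)=1$, involves the index $y$ for each fixed $x$ separately. Hence the minimization over $\mathbb{W}$ splits into $|\Omega|$ independent problems indexed by $x$, and it suffices to establish, for each fixed $x$, the pointwise identity
\[
-\ln\sum_{y\in\widetilde{O}^c}f(x,y)=\min_{w\in\Delta}\biggl\{-\sum_{y\in\widetilde{O}^c}\ln f(x,y)\,w(x,y)+\sum_{y\in\widetilde{O}^c}w(x,y)\ln w(x,y)\biggr\},
\]
where $\Delta$ denotes the probability simplex $\{w(x,\cdot)\ge0:\sum_{y}w(x,y)=1\}$ for that fixed $x$; summing this identity over $x\in\Omega$ then yields the lemma.

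Fixing $x$ and abbreviating $f_y:=f(x,y)>0$, $w_y:=w(x,y)$ and $Z:=\sum_{y}f_y$, I would merge the two sums in the objective into the single expression $\sum_{y}w_y\ln(w_y/f_y)$, adopting the usual convention $0\ln 0=0$ so that the objective is continuous on the closed simplex. Writing $p_y:=f_y/Z$, which is itself a probability vector since all $f_y>0$, a short algebraic rearrangement gives
\[
\sum_{y}w_y\ln\frac{w_y}{f_y}=\sum_{y}w_y\ln\frac{w_y}{p_y}-\ln Z.
\]
The first term on the right is the Kullback--Leibler divergence between the distributions $w$ and $p$, which by Gibbs' inequality (equivalently, by Jensen applied to the convex function $t\ln t$) is nonnegative and vanishes exactly when $w_y=p_y$ for every $y$. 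Therefore the objective is bounded below by $-\ln Z=-\ln\sum_{y}f_y$, with equality attained at the admissible choice $w_y^{*}=f_y/Z$, which proves the scalar identity and hence, after summation, the stated equality together with the identification of the minimizer as the E-step weight in \eqref{2.3}.

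As an alternative derivation that exhibits the minimizer directly, I would form the Lagrangian $\sum_y w_y\ln(w_y/f_y)+\lambda\bigl(\sum_y w_y-1\bigr)$; the stationarity condition $\ln(w_y/f_y)+1+\lambda=0$ forces $w_y\propto f_y$, and the normalization constraint recovers $w_y^{*}=f_y/Z$. Since each summand $w_y\ln(w_y/f_y)$ is convex in $w_y$, this stationary point is the global minimum over the simplex, and substituting it back reproduces the value $-\ln Z$. The only point requiring care --- the main, and rather minor, obstacle --- is the boundary of the simplex: I must justify that the infimum is attained in the closed set despite the singularity of $\ln w_y$ as $w_y\to0^{+}$. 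This follows because $w_y\ln w_y\to0$ there, so the continuous extension of the objective to the compact simplex attains its minimum, and the interior stationary point $w^{*}$ (with all $w_y^{*}>0$, as $f_y>0$) is that minimizer; strict convexity of $t\ln t$ additionally gives uniqueness, although uniqueness is not needed for the statement.
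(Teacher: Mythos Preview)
Your proof is correct and, in fact, more complete than the paper's. The paper proceeds directly via Lagrange multipliers on the full objective: introducing a multiplier $\lambda(x)$ for each constraint $\sum_{y}w(x,y)=1$, it solves the stationarity condition to obtain $w^{*}(x,y)=f(x,y)\exp(-1-\lambda(x))$, eliminates $\lambda(x)$ using the constraint to get $w^{*}=f(x,y)/\sum_{y}f(x,y)$, asserts without further argument that this is the minimizer, and substitutes back. Your alternative derivation is exactly this argument, so in that sense you recover the paper's proof as a special case.

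Your primary route, however, is genuinely different and arguably cleaner: you first decouple over $x$, then recognize the per-$x$ objective as a shifted Kullback--Leibler divergence $D_{\mathrm{KL}}(w\,\|\,p)-\ln Z$ with $p_y=f_y/Z$, and invoke Gibbs' inequality to read off both the minimum value and the minimizer at once. This buys you a direct global-optimality certificate (nonnegativity of KL) rather than relying on a first-order condition plus a separate convexity check, and it makes the information-theoretic content of the lemma explicit. You also handle the boundary of the simplex and the $0\ln 0$ convention, a point the paper passes over in silence when it says ``it is easy to check that $w^{*}$ is the minimizer.''
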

\begin{proof} Set
$$
\mathcal{P}(w)=-\sum_{x \in \Omega}\sum_{y \in \widetilde{O}^c}\ln f(x,y)w(x,y)+\sum_{x \in \Omega}\sum_{y \in \widetilde{O}^c}w(x,y)\ln w(x,y).
$$
The Lagrangian function of the considered optimization problem is
$$
G(w,\lambda)=\mathcal{P}(w)+\sum_{x \in \Omega}\lambda(x)\large\left(\sum_{y \in \widetilde{O}^c}w(x,y)-1\large\right).
$$
According to the necessary condition of seeking extreme value point $\frac{\partial G}{\partial w}=0$, we deduce that
\begin{align}\label{2.5}
w^*=f(x,y)\exp(-1-\lambda(x)).
\end{align}
Then, since $\sum\limits_{y \in \widetilde{O}^c}w(x,y)=1$, we have
\begin{align}\label{2.6}
\lambda(x)+1=\ln\sum_{y \in\widetilde{O}^c}f(x,y).
\end{align}
Using \eqref{2.5} and \eqref{2.6}, we calculate that
\begin{align}\label{2.7}
w^*=\frac{f(x,y)}{\sum\limits_{y \in \widetilde{O}^c}f(x,y)}.
\end{align}
It is easy to check that $w^*$ is the minimizer of $\mathcal{P}(w)$. Therefore taking \eqref{2.7} into $\mathcal{P}$, we conclude
$$
\mathcal{P}(w^*)=-\sum_{x \in \Omega}\ln\sum_{y \in \widetilde{O}^c}f(x,y).
$$
\end{proof}

We define
$$
H(u,w):=-\sum_{x \in \Omega}\sum_{y \in \widetilde{O}^c}\ln \delta_h(u_B(x)-u_B(y))w(x,y)+\sum_{x \in \Omega}\sum_{y \in \widetilde{O}^c}w(x,y)\ln w(x,y),
$$
and according to Lemma 1, we see that
$$
\min_{u}E_1(u)=\min_u\{\min_{w \in \mathbb{W}}H(u,w)\}.
$$

Next, we can solve this problem with the alternating direction minimization method as follows.
\begin{align}\label{2.9}
\begin{cases}
w^{n+1}=\arg\min\limits_{w \in \mathbb{W}}H(u^n,w),\\
u^{n+1}=\arg\min\limits_{u} H(u,w^{n+1}).
\end{cases}
\end{align}
Obviously, the two problem in \eqref{2.9} are equivalent to the E-step \eqref{2.3} and the M-step \eqref{2.4} in the usual EM algorithm, respectively.

Thus, based on the above analysis, we propose the following image inpainting functional
\begin{align*}
(u^*,w^*)&=\arg\min\limits_{u,w \in \mathbb{W}}\biggl\{\frac{1}{2\sigma^2}\int_{\widetilde{O}^c}(u(x)-v(x))^2dx+\int_{\Omega}\int_{\widetilde{O}^c}w(x,y)\ln w(x,y)dydx\\
&-\int_{\Omega}\int_{\widetilde{O}^c}\ln \delta_h(u_B(x)-u_B(y))w(x,y)dydx
\biggl\}.
\end{align*}

Since $\delta_h(\mathbf{x})=\frac{1}{(\pi h)^\frac{|B|}{2}}\exp \large\left(-\frac{\|\mathbf{x}\|^2}{h}\large\right)$, then we obtain
\begin{align*}
(u^*,w^*)&=\arg\min\limits_{u,w \in \mathbb{W}}\biggl\{\frac{\lambda}{2}\int_{\widetilde{O}^c}(u(x)-v(x))^2dx
+h\int_{\Omega}\int_{\widetilde{O}^c}w(x,y)\ln w(x,y)dydx\\
&+\int_{\Omega}\int_{\widetilde{O}^c}\|u_B(x)-u_B(y)\|^2w(x,y)dydx\biggl\}.
\end{align*}
\subsection{Patch Error Function}
Patch error function is defined as
$$
\|u_B(x)-u_B(y)\|^2:=\int_{B_r}g(z)[(\rho_\varepsilon *u)(x+z)-(\rho_\varepsilon *u)(y+z)]^2dz,
$$
where $g:B_r \rightarrow \mathbb{R}^+,\rho_\varepsilon:B_\varepsilon\rightarrow \mathbb{R}^+$ are two intra-block weight function with $g(z)=g(-z)$, $\int_{B_r} g(z)dz=1$ and $\int_{B_\varepsilon} \rho_\varepsilon(z)dz=1$.
The function $g(z)$ is usually chosen as Gaussian function, which can guarantee that the weight of pixel errors closer to the center of the block is larger, while the weight of pixel errors closer to both sides of the block is smaller.
The symbol "$\ast$" stands for the convolution operator, namely, $(\rho_{\varepsilon} * u)(x)=\int_{B_\varepsilon}\rho_{\varepsilon}(y)u(x-y)dy$.


Therefore, the previous model can be modified as the following form
\begin{align*}
(u^*,w^*)&=\arg\min\limits_{u,w \in \mathbb{W}}\biggl\{\frac{\lambda}{2}\int_{\widetilde{O}^c}(u(x)-v(x))^2dx+h\int_{\Omega}\int_{\widetilde{O}^c}w(x,y)\ln w(x,y)dydx\\
&+\int_{\Omega}\int_{\widetilde{O}^c}\int_{B_r}g(z)[(\rho_\epsilon*u)(x+z)-(\rho_\epsilon*u)(y+z)]^2w(x,y)dzdydx
\biggl\},
\end{align*}
where $\mathbb{W}=\biggl\{w:\Omega \times \widetilde{O}^c\rightarrow \mathbb{R}:0\leq w(x,y) \leq 1,\mathrm{and}\int_{\widetilde{O}^c}w(x,y)dy=1,\forall x \in \Omega \biggl\}.$

Notice that $\Omega=\widetilde{O}\bigcup\widetilde{O}^c$, thus
\begin{align*}
(u^*,w^*)&=\arg\min\limits_{u,w \in \mathbb{W}}\biggl\{\frac{\lambda}{2}\int_{\widetilde{O}^c}(u(x)-v(x))^2dx+h\int_{\widetilde{O}^c}\int_{\widetilde{O}^c}w(x,y)\ln w(x,y)dydx\\
&+\int_{\widetilde{O}^c}\int_{\widetilde{O}^c}\int_{B_r}g(z)[(\rho_\epsilon*u)(x+z)-(\rho_\epsilon*u)(y+z)]^2w(x,y)dzdydx\\
&+h\int_{\widetilde{O}}\int_{\widetilde{O}^c}w(x,y)\ln w(x,y)dydx\\
&+\int_{\widetilde{O}}\int_{\widetilde{O}^c}\int_{B_r}g(z)[(\rho_\epsilon*u)(x+z)-(\rho_\epsilon*u)(y+z)]^2w(x,y)dzdydx
\biggl\}\\
&:=\arg\min\limits_{u,w \in \mathbb{W}}\biggl\{F_1(u,w)+F_2(u,w)\biggl\}.
\end{align*}
where $\mathbb{W}=\biggl\{w:\Omega \times \widetilde{O}^c\rightarrow \mathbb{R}:0\leq w(x,y) \leq 1,\mathrm{and}\int_{\widetilde{O}^c}w(x,y)dy=1,\forall x \in \Omega \biggl\}.$

The first three terms $F_1(u,w)$ in the above energy functional are used for denoising on $O^c$, i.e. the BNL$H^1$ denoising model in \cite{Liu2017A}, while the last two terms $F_2(u,w)$ is for inpainting on $O$, i.e. the patch non-local means inpainting scheme in \cite{Arias2011A}. Therefore, in the proposed model, inpainting and denoising process are implemented simultaneously.

\section{Existence of Minimizer}\label{ExistenceofMinimizer}
In this section, we will prove the existence of minimizer for the proposed model.
\begin{align*}
J(u,w)&=\frac{\lambda}{2}\int_{\widetilde{O}^c}(u(x)-v(x))^2dx+h\int_{\Omega}\int_{\widetilde{O}^c}w(x,y)\ln w(x,y)dydx\\
&+\int_{\Omega}\int_{\widetilde{O}^c}\int_{B_r}g(z)[(\rho_\epsilon*u)(x+z)-(\rho_\epsilon*u)(y+z)]^2w(x,y)dzdydx.
\end{align*}

We will show the existence of minimizer in the following space
$$
X:=\{(u,w):u \in L^2(\Omega),w \in \mathbb{W}\subset L^\infty(\Omega,\widetilde{O}^c) \}.
$$
\begin{Pro} There exists a solution of the variational problem
\begin{equation*}
\inf\limits_{(u,w) \in X} J(u,w).\tag{$\star$}
\end{equation*}
\end{Pro}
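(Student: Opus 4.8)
The plan is to use the direct method of the calculus of variations. First I would check that the infimum is finite: the fidelity and the nonlocal quadratic term are nonnegative, and on $\mathbb{W}$ one has $0\le w\le 1$, so the entropy integrand satisfies $w\ln w\ge -e^{-1}$; integrating over the bounded set $\Omega\times\widetilde{O}^c$ gives $J(u,w)\ge -C$, whence $m:=\inf_X J>-\infty$. I then fix a minimizing sequence $(u_n,w_n)\subset X$ with $J(u_n,w_n)\to m$.

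Next I would extract compactness. For the weights, $\mathbb{W}$ is bounded in $L^\infty(\Omega,\widetilde{O}^c)$ and hence, the domain being bounded, in $L^2$, so up to a subsequence $w_n\rightharpoonup w^*$ weakly in $L^2$. Because $\mathbb{W}$ is cut out by the convex constraints $0\le w\le 1$ and $\int_{\widetilde{O}^c}w(x,y)\,dy=1$, it is convex and strongly closed, hence weakly closed by Mazur's lemma, so $w^*\in\mathbb{W}$; the normalization survives in the limit by testing the weak convergence against the indicator of $\widetilde{O}^c$. For the image, the fidelity term bounds $\|u_n\|_{L^2(\widetilde{O}^c)}$, while the nonlocal term together with $\int_{\widetilde{O}^c}w_n(x,\cdot)=1$ controls $\rho_\epsilon*u_n$ on $\widetilde{O}+B_r$ (after completing the square in $(\rho_\epsilon*u_n)(x+z)$). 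I would thereby obtain a weakly convergent subsequence $u_n\rightharpoonup u^*$ in $L^2(\Omega)$, working, if necessary, in the physically natural bounded range of gray values $\{a\le u\le b\}$, which is convex, closed and bounded in $L^2$ and hence weakly compact.

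The decisive ingredient for passing to the limit is that the mollification $Tu=\rho_\epsilon*u$ is a Hilbert--Schmidt, hence compact, operator on $L^2$, since its kernel $\rho_\epsilon(x-y)$ is square integrable over the bounded domain. Consequently $u_n\rightharpoonup u^*$ forces $\rho_\epsilon*u_n\to\rho_\epsilon*u^*$ strongly in $L^2$ and, along a further subsequence, pointwise almost everywhere. The fidelity and entropy terms are lower semicontinuous under weak $L^2$ convergence, both having convex integrands. For the nonlocal term I observe that the bracket $[(\rho_\epsilon*u_n)(x+z)-(\rho_\epsilon*u_n)(y+z)]^2$ converges strongly and stays bounded, while $w_n\rightharpoonup w^*$; the elementary fact that $\int a_n w_n\to\int a w$ whenever $a_n\to a$ strongly and $w_n\rightharpoonup w$ weakly then lets this entire term pass to the limit. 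Collecting the three estimates yields $J(u^*,w^*)\le\liminf_n J(u_n,w_n)=m$, and since $(u^*,w^*)\in X$ this is the desired minimizer.

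The main obstacle I anticipate is the compactness of $u_n$ on the inpainting region $\widetilde{O}$, where there is no fidelity: the energy sees $u|_{\widetilde{O}}$ only through the smoothed patches $\rho_\epsilon*u$, so high-frequency oscillations are essentially invisible and the plain $L^2(\Omega)$ norm is genuinely not coercive. This is exactly why the smoothing convolution is structurally important: its compactness both salvages the a priori bound (or must be supplemented by the bounded gray-value constraint) and is precisely what renders the nonlocal term, which is only biconvex rather than jointly convex in $(u,w)$, sequentially lower semicontinuous.
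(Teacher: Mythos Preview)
Your overall strategy matches the paper's: direct method, weak lower semicontinuity of the convex terms $J_1$ and $J_2$, and a separate compactness argument to pass the nonlocal term $J_3$ to the limit. The genuine difference lies in how $J_3$ is handled. The paper sets $b_n(x,y)=\int_{B_r}g(z)[(\rho_\epsilon*u_n)(x+z)-(\rho_\epsilon*u_n)(y+z)]^2\,dz$, differentiates under the integral (using $\partial_x(\rho_\epsilon*u_n)=(\partial_x\rho_\epsilon)*u_n$ and the $L^2$ bound on $u_n$) to show $\{b_n\}$ bounded in $W^{1,\infty}(\Omega\times\widetilde O^c)$, and then invokes the compact embedding $W^{1,\infty}\hookrightarrow L^1$ to obtain $b_n\to b$ strongly in $L^1$, which pairs with $w_n\stackrel{*}{\rightharpoonup}w$ in $L^\infty$. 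Your route via the Hilbert--Schmidt compactness of $u\mapsto\rho_\epsilon*u$ is more direct: it converts weak convergence of $u_n$ into strong $L^2$ convergence of $\rho_\epsilon*u_n$ without ever differentiating the mollifier, and then the squared bracket converges strongly (using the $L^\infty$ bound to handle the square), again pairing with weak convergence of $w_n$. Both exploit the smoothing in $\rho_\epsilon$, but the paper encodes it through a derivative bound whereas you encode it through operator compactness; your argument is slightly more elementary.

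On coercivity you are in fact more scrupulous than the paper, which simply asserts ``since $J(u,w)$ is coercive, $u_n$ must be bounded'' in $L^2(\Omega)$ without justification. Your observation that the energy sees $u|_{\widetilde O}$ only through $\rho_\epsilon*u$, so that plain $L^2(\Omega)$-coercivity is not evident, correctly identifies a point the paper glosses over. Your fallback to the physically natural box constraint $a\le u\le b$ is a standard and legitimate remedy, and once it is imposed both the weak compactness of $(u_n)$ and the $L^\infty$ bound on $\rho_\epsilon*u_n$ needed for your $J_3$ argument follow immediately.
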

\begin{proof}
\begin{align*}
J_1(u)&=\frac{\lambda}{2}\int_{\widetilde{O}^c}(u(x)-v(x))^2dx,\\
J_2(w)&=h\int_{\Omega}\int_{\widetilde{O}^c}w(x,y)\ln w(x,y)dydx,\\
J_3(u,w)&=\int_{\Omega}\int_{\widetilde{O}^c}\int_{B_r}g(z)[(\rho_\epsilon*u)(x+z)-(\rho_\epsilon*u)(y+z)]^2w(x,y)dzdydx.
\end{align*}

Observe that $J_1(u)\geq 0$ and $J_3(u,w)\geq 0$. 
Furthermore, since $x\ln x\geq -\frac{1}{e}$ whenever $x \geq 0$ and $\Omega$ is a bounded domain, we have that $J_2(w)\geq -\frac{h}{e}|\widetilde{O}^c||\Omega|$. Thus $J(u,w)$ has a lower bound and $\inf\limits_{(u,w) \in X} J(u,w)$ exists. Let $\{(u_n,w_n)\}$ be a minimizing sequence of ($\star$), i.e. a sequence such that
$$
J(u_n,w_n)\rightarrow \inf\limits_{(u,w) \in X} J(u,w).
$$

It is clear that $\|w_n\|_{L^\infty}\leq 1$. By the Banach-Alaoglu Theorem, we conclude that there exists a * weakly convergent subsequence (which we relabel by n) and a * weak limit $w \in L^\infty (\Omega,\widetilde{O}^c)$ such that
$$
w_n \stackrel{\text{*}}{\rightharpoonup} w~~~ \mathrm{in}~~~ L^\infty (\Omega,\widetilde{O}^c).
$$
Since $\mathbb{W}$ is convex, closed subset of $L^\infty (\Omega,\widetilde{O}^c)$, and so, by Mazur's Theorem, $w \in \mathbb{W}$. From $w \ln w$ is convex with respect to $w$, $J_2(w)$ is * weakly lower semicontinuous, i.e.
$$
\liminf_{n\rightarrow \infty}J_2(w_n)\geq J_2(w).
$$

Since $J(u,w)$ is coercive, ${u_n}$ must be bounded, i.e. there is a constant $M>0$ such that
$$
\|u_n\|_{L^2(\Omega)}\leq M.
$$
Boundedness of the sequence in a reflexive space implies the existence of a weakly convergent subsequence, which we still denote by ${u_n}$, thus there exists a weak limit $u$ such that
$$
u_n\rightharpoonup u~~~ \mathrm{in}~~~ L^2(\widetilde{O}^c).
$$
Since $J_1:u\mapsto \frac{\lambda}{2}\int_{\widetilde{O}^c}(u(x)-v(x))^2dx$ is continuous and convex, it follows that
$$
\liminf_{n\rightarrow \infty}J_1(u_n)\geq J_1(u).
$$

Set $b_n(x,y)=\int_{B_r}g(z)[(\rho_\epsilon*u_n)(x+z)-(\rho_\epsilon*u_n)(y+z)]^2dz$, it is obviously that $b_n(x,y)$ is bounded. Now let us prove $\frac{\partial b_n}{\partial x}(x,y)$ and $\frac{\partial b_n}{\partial y}(x,y)$ are bounded. We write
\begin{align*}
\frac{\partial b_n}{\partial x}(x,y)&=2\int_{B_r}g(z)[(\rho_\epsilon*u_n)(x+z)-(\rho_\epsilon*u_n)(y+z)]\large\left[(\frac{\partial \rho_\epsilon}{\partial x}*u_n)(x+z)\large\right]dz,\\
\frac{\partial b_n}{\partial y}(x,y)&=-2\int_{B_r}g(z)[(\rho_\epsilon*u_n)(x+z)-(\rho_\epsilon*u_n)(y+z)]\large\left[(\frac{\partial \rho_\epsilon}{\partial y}*u_n)(y+z)\large\right]dz.
\end{align*}
As the boundedness of $u_n$, we deduce that ${b_n(x,y)}$ is a bounded sequence in $W^{1,\infty}(\Omega,\widetilde{O}^c)$. By Compactness Embedding Theorem, if necessary, we may assume that ${b_n(x,y)}$ converges in $L^1(\Omega,\widetilde{O}^c)$ to some $b(x,y)$. Note that
$$
w_n \stackrel{\text{*}}{\rightharpoonup} w~~~ \mathrm{in}~~~ L^\infty (\Omega,\widetilde{O}^c).
$$
Thus
$$
\int_\Omega\int_{\widetilde{O}^c} w_{n}(x,y)b_n(x,y)dydx\rightarrow\int_\Omega\int_{\widetilde{O}^c} w(x,y)b(x,y)dydx,~~n\rightarrow +\infty,
$$
i.e.
$$
\lim_{n\rightarrow \infty}J_3(u_n,w_n)= J_3(u,w).
$$
Consequently
$$
J(u,w)=J_1(u)+J_2(w)+J_3(u,w) \leq \liminf_{n\rightarrow \infty}J(u_n,w_n),
$$
and
$$
J(u,w)=\inf\limits_{(u,w) \in X} J(u,w).
$$
\end{proof}

\section{Algorithms}\label{Algorithms}
The symbol "$\sim$" is denoted as an expansion of a function. For example, $\widetilde{w}(x-z,y)$ is an expansion of $w(x-z,y)$ with
$$\widetilde{w}(x-z,y)=\Bigg\{\begin{array}{cc}
                                      w(x-z,y),& (x,y) \in \Omega\times \widetilde{O}^c\\
                                      0,& else
                                    \end{array}
\Bigg..$$
To simplify the computation, we can set $\rho_{\varepsilon}$ to the delta function $\delta$, then $\rho_{\varepsilon}*u=\delta*u=u$. Furthermore, to minimize the energy $J$ directly, we can use an alternate minimization algorithm. This process is summarized in Algorithm 1.
\begin{Alg} Given an initial value $u^0=v$, for $n=1,2,3,\cdots,$ do

Step 1. Update Weights:\\
$$
w^{n+1}(x,y)=\frac{\exp\large\left(-\frac{\int_{B_r}g(z)[u^n(x+z)-u^n(y+z)]^2dz }{h}\large\right)}{\int_{\widetilde{O}^c}\exp\large\left(-\frac{\int_{B_r}g(z)[u^n(x+z)-u^n(y+z)]^2dz
}{h}\large\right)dy}.
$$.

Step 2. Update Image:\\
$$
{u^{n+1}(x)=\frac{2\int_{\Omega}\int_{B_r}g(z)u^{n}(y+z)[\widetilde{w}^{n}(x-z,y)+\widetilde{w}^{n}(y,x-z)]dzdy+\widetilde{\lambda}v(x)}{2\int_{\Omega}\int_{B_r}g(z)[\widetilde{w}^{n}(x-z,y)+\widetilde{w}^{n}(y,x-z)]dzdy+\widetilde{\lambda}},}
$$
where $\widetilde{\lambda}=\Bigg\{\begin{array}{cc}
                                      \lambda,&x \in O^c\\
                                      0,&x \in O
                                    \end{array}
\Bigg.$.

Step 3. If the convergence condition $\frac{\|u^{n+1}-u^n\|^2}{\|u^n\|^2}$ less than a tolerant error, then stop. Otherwise, go to the Step 1.
\end{Alg}

Notice that this algorithm is a coupling problem of image denoising and inpainting. However, block based methods are greatly dependent on the size of patch, and different patch sizes are needed to deal with Gaussian noise and inpainting areas. For inpainting problem, on the one hand, the results with large patch sizes perform well in restoring the whole of structure, but fail in keeping details; on the other hand, if small patches are chosen, the computation process will take more time and the whole structure of images will be more difficult to reconstruct, especially for large missing regions. Therefore, the multiscale technique is used in \cite{Arias2011A} for image inpainting. Based on the above analysis, we naturally propose a two-stage decoupling process: inside the inpainting domain $O$, we can apply the patch NL-means inpainting scheme in \cite{Arias2011A}; while outside, we employ the BNL$H^1$ denoising model in \cite{Liu2017A}.

\section{Experimental results}\label{Experimentresult}

In this section, we numerically demonstrate the superior performance of our proposed algorithm on natural image restoration problems. Firstly, we compare the patch non-local means inpainting method in \cite{Arias2011A} with several existing inpainting methods: TV based PDE method, coherence transport method (CTM) and exemplar based method (EBM). Next, we further test the effectiveness of the proposed algorithm for image inpainting and denoising simultaneously.
All the experiments are run under Windows 7 and  MATLAB R2017a with Intel Core i5-5200U CPU@2.80GHz and 8GB memory.

\subsection{Inpainting}

In this section, we assume the test images are not polluted by noise and show the superiority of the patch non-local inpainting method on missing block completion. Fig.2(a) and Fig.3(a) display the clean original images: Columbia and Barbara. 
In order to see more details, we enlarge the inpainting region in the read rectangle and show it in the left lower corner of the image.
The inpainting regions are represented by white colour, see Fig.2(b) and Fig.3(b). We recover the contaminated images with different inpainting methods. The results of TV are shown in Fig.2(c) and Fig.3(c). We can observe that TV is not good at restoring texture information and large missing regions.
CTM can not recover the pillar on image Columbia in Fig.2(d) and gives somewhat strange results in Fig.3(d).
EBM has the ability to fill in textures, however, brings some artifacts in Fig.2(e) and Fig.3(e).
We can clearly see that the patch non-local inpainting method leads to much better results than all the other three methods both quantitatively and qualitatively.

\begin{figure}[h]
\begin{center}
  \includegraphics[width=1.0\linewidth]{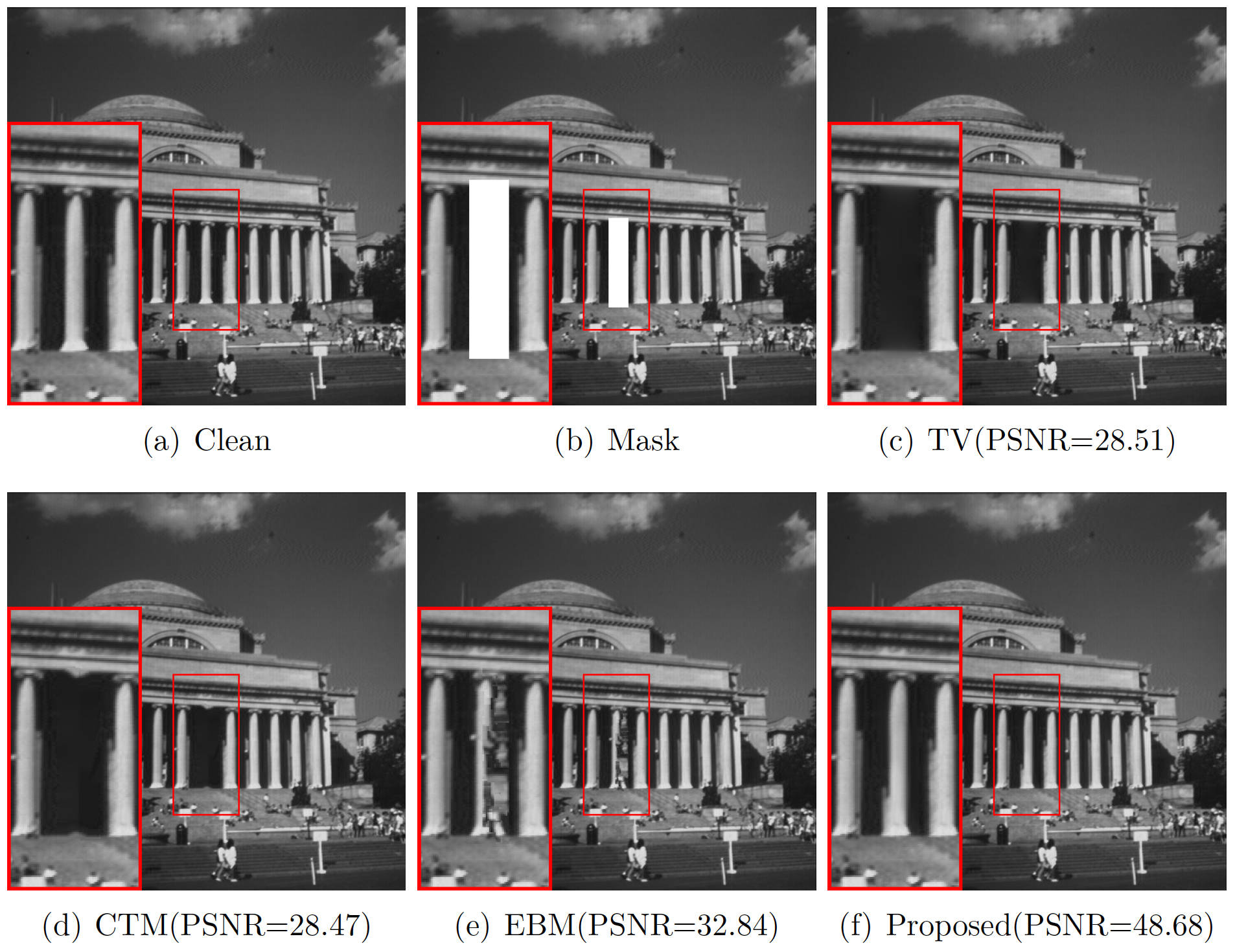}
%
\caption{Inpainting results of image Columbia by different inpainting methods.
The left lower images are the enlarged parts of the red rectangle regions.}
\label{figu3}
\end{center}
\end{figure}

\begin{figure}[h]
\begin{center}
  \includegraphics[width=1.0\linewidth]{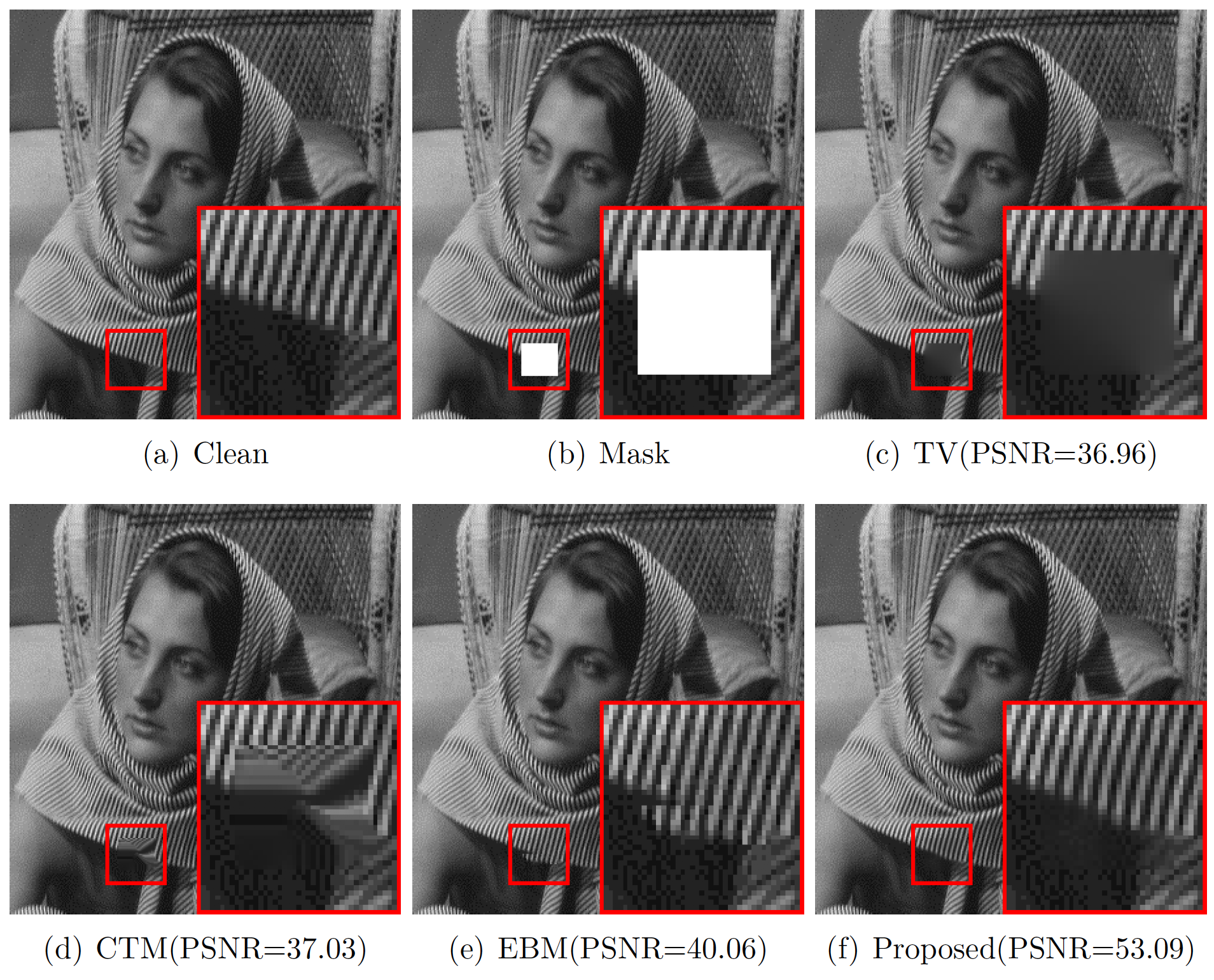}
%
\caption{Inpainting results of image Barbara by different inpainting methods. The right lower images are the enlarged parts of the red rectangle regions.}
\label{figu3}
\end{center}
\end{figure}

\subsection{Inpainting and Denoising}

In this section, we assume the inpainting images are contaminated by Gaussian white noise and apply the proposed algorithm on image inpainting and denoising simultaneously. The test images include Man, Bear and Boat.
The first column of \figurename s~\ref{figu2:1}~\ref{figu2:2}~\ref{figu2:3} are the original images and inpainting mask. The white color denotes the missing regions and the black one denotes the available regions.
Our purpose is to carry out image inpainting on the missing regions and image denoising on the available regions.
The second column shows the noisy images contaminated by Gaussian white noise with standard deviation $\sigma=10$ (top) and $\sigma=30$ (bottom).
In the third column, we display their corresponding reconstruction results of the patch NL-means method in \cite{Arias2011A}. We observe that these inpainting results still contain noise, because this method can not perform image denoising on the available regions.
The experimental results for the proposed algorithm are demonstrated in the last column.
It can be observed that the proposed method can realize image inpainting and denoising at the same time, and lead to visually perfect results. However, the higher the noise level, the more artifacts of the reconstructed images, such as the oversmoothing phenomenon on the soil of image Bear.

\begin{figure}[h]
\begin{center}
  \includegraphics[width=1.0\linewidth]{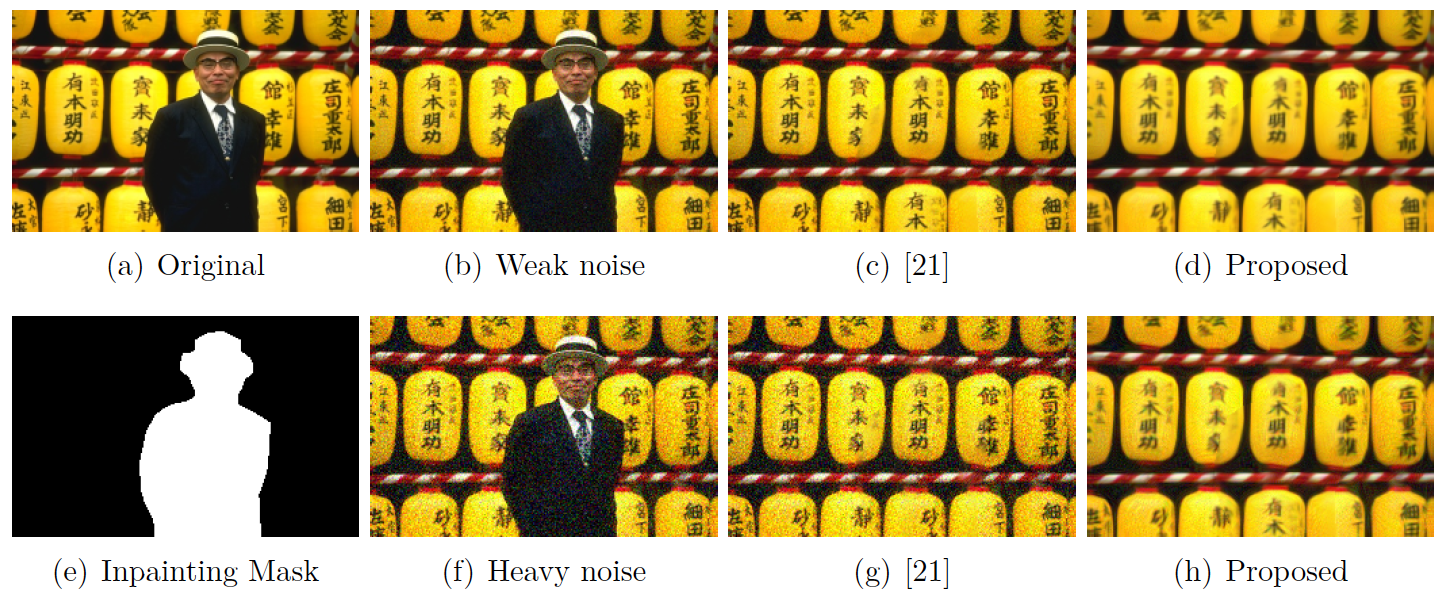}
\caption{Results of denoising and inpainting simultaneously. Left Column: original image and inpainting mask. Second column, from top to bottom: images contaminated by Gaussian noise with standard deviation $\sigma=10$ and $\sigma=30$. The third column shows their corresponding results of the patch NL-means method in \cite{Arias2011A}. The last column shows the simultaneous inpainting and denoising results of the proposed method.}
\label{figu2:1}
\end{center}
\end{figure}

\begin{figure}[h]
\begin{center}
  \includegraphics[width=1.0\linewidth]{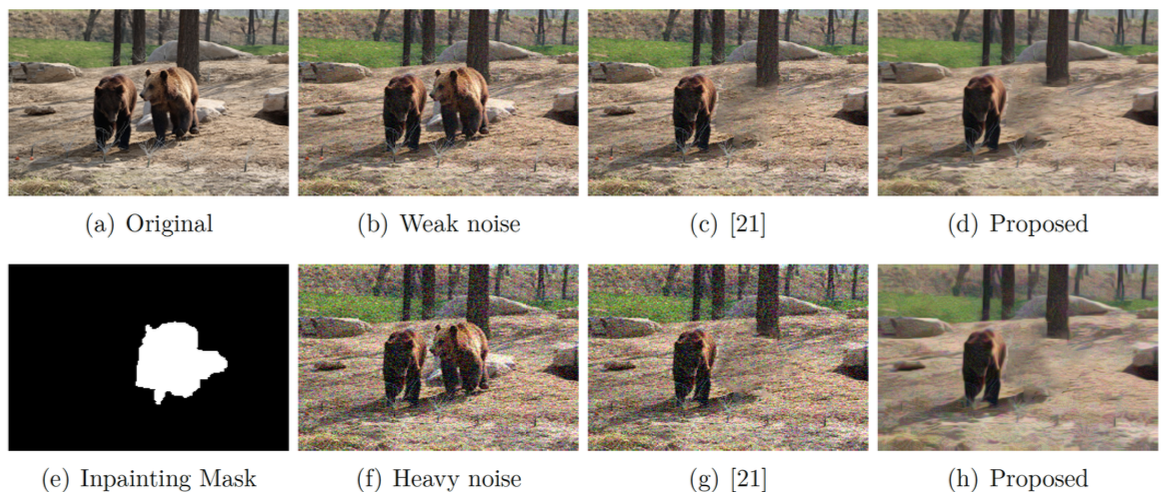}
\caption{Results of denoising and inpainting simultaneously. Left Column: original image and inpainting mask. Second column, from top to bottom: images contaminated by Gaussian noise with standard deviation $\sigma=10$ and $\sigma=30$. The third column shows their corresponding results of the patch NL-means method in \cite{Arias2011A}. The last column shows the simultaneous inpainting and denoising results of the proposed method.}
\label{figu2:2}
\end{center}
\end{figure}

\begin{figure}[h]
\begin{center}
  \includegraphics[width=1.0\linewidth]{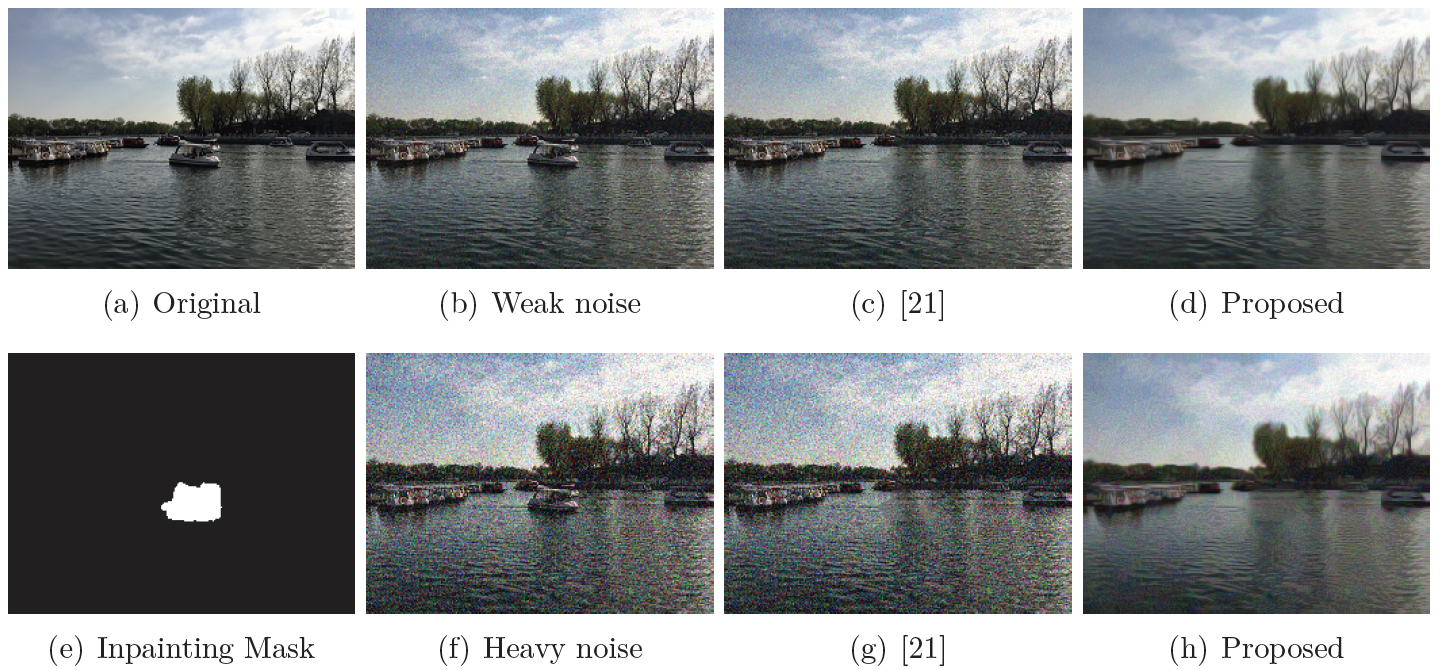}
\caption{Results of denoising and inpainting simultaneously. Left Column: original image and inpainting mask. Second column, from top to bottom: images contaminated by Gaussian noise with standard deviation $\sigma=10$ and $\sigma=30$. The third column shows their corresponding results of the patch NL-means method in \cite{Arias2011A}. The last column shows the simultaneous inpainting and denoising results of the proposed method.}
\label{figu2:3}
\end{center}
\end{figure}

\section{Conclusion}\label{ConclusionandFutureWork}

Motivated by the fact that the available regions of the original images are usually contaminated with noise, this work presents a general variational framework for block based non-local image inpainting. In the proposed model, image inpainting and denoising process are carried out simultaneously. Furthermore, we mathematically prove the existence of minimizer for the proposed model. To solve the proposed model efficiently, we present a decoupling algorithm. Experimental results show that it can provide some impressive results in image restoration.

\section*{Acknowledgement}
Jun Liu was supported by the National Natural Science Foundation of China (No. 11871035) and 
 the National Key Research and Development Program of China (2017YFA0604903).

\section*{Appendix}\label{sec:Appendix}
Under the hypothesis that $u_B(x),x \in \Omega$ are independent identically distributed with
$$
p(u_B(x);\Theta)=\sum\limits_{y \in \widetilde{O}^c}\frac{1}{|\widetilde{O}^c|}\delta_h(u_B(x)-u_B(y))=\sum\limits_{y \in \widetilde{O}^c}\frac{1}{|\widetilde{O}^c|}p(u_B(x);u_B(y),h),
$$
we have
$$
p(u_B;\Theta)=\prod\limits_{x \in \Omega}\sum\limits_{y \in \widetilde{O}^c}\frac{1}{|\widetilde{O}^c|}\delta_h(u_B(x)-u_B(y))=\prod\limits_{x \in \Omega}\sum\limits_{y \in \widetilde{O}^c}\frac{1}{|\widetilde{O}^c|}p(u_B(x);u_B(y),h),
$$
where $\Theta=\{h\}\bigcup\{u_B(y),y \in \widetilde{O}^c\}$ is an unknown set of pharameters.

Therefore, its negative log-likelihood function is as follows
$$
L(\Theta)=-\ln p(u_B;\Theta)=-\sum\limits_{x \in \Omega}\ln\sum\limits_{y \in \widetilde{O}^c}\frac{1}{|\widetilde{O}^c|}\delta_h(u_B(x)-u_B(y)).
$$

According to the full probability formula and the conditional probability formula, we obtain
\begin{align*}
L(\Theta)&=L(\Theta)\sum\limits_{\mathbf{c}}p(\mathbf{c}|u_B;\Theta^n)=-\sum\limits_{\mathbf{c}}p(\mathbf{c}|u_B;\Theta^n)\ln p(u_B;\Theta)\\
&=-\sum\limits_{\mathbf{c}}p(\mathbf{c}|u_B;\Theta^n)\ln \frac{p(u_B,\mathbf{c};\Theta)}{p(\mathbf{c}|u_B;\Theta)}\\
&=-\sum\limits_{\mathbf{c}}p(\mathbf{c}|u_B;\Theta^n)\ln p(u_B,\mathbf{c};\Theta)+\sum\limits_{\mathbf{c}}p(\mathbf{c}|u_B;\Theta^n)\ln p(\mathbf{c}|u_B;\Theta)\\
&:=Q(\Theta;\Theta^n)-H(\Theta;\Theta^n),
\end{align*}
where $\sum\limits_{\mathbf{c}}=\sum\limits_{c_1=1}^N\sum\limits_{c_2=1}^N\cdots\sum\limits_{c_N=1}^N$ and $N=|\widetilde{O}^c|$.\\

Under the i.i.d assumption of data, we have
\begin{align*}
p(\mathbf{c}|u_B;\Theta^n)&=\prod\limits_{y \in \widetilde{O}^c}p(c_j|u_B(y);\Theta^n),\\
p(u_B,\mathbf{c};\Theta)&=\prod\limits_{x \in \widetilde{O}^c}p(u_B(x),c_i;\Theta)=\prod\limits_{x \in \widetilde{O}^c}p(c_i;\Theta)p(u_B(x)|c_i;\Theta)=\prod\limits_{x \in \widetilde{O}^c}\alpha_{c_i}p(u_B(x)|c_i;\Theta).
\end{align*}
Then
\begin{align*}
Q(\Theta;\Theta^n)&=-\sum\limits_{\mathbf{c}}p(\mathbf{c}|u_B;\Theta^n)\ln p(u_B,\mathbf{c};\Theta)\\
&=-\sum\limits_{c_1=1}^N\sum\limits_{c_2=1}^N\cdots\sum\limits_{c_N=1}^N\sum\limits_{x \in \widetilde{O}^c} \ln (\alpha_{c_i}p(u_B(x)|c_i;\Theta))\prod\limits_{y \in \widetilde{O}^c}p(c_j|u_B(y);\Theta^n)\\
&=-\sum\limits_{c_1=1}^N\sum\limits_{c_2=1}^N\cdots\sum\limits_{c_N=1}^N\sum\limits_{x \in \widetilde{O}^c}\sum\limits_{y \in \widetilde{O}^c}\delta_{y,c_i} \ln (\alpha_{y}p(u_B(x)|y;\Theta))\prod\limits_{y \in \widetilde{O}^c}p(c_j|u_B(y);\Theta^n)\\
&=-\sum\limits_{x \in \widetilde{O}^c}\sum\limits_{y \in \widetilde{O}^c}\ln (\alpha_{y}p(u_B(x)|y;\Theta))\underbrace{\sum\limits_{c_1=1}^N\sum\limits_{c_2=1}^N\cdots\sum\limits_{c_N=1}^N\delta_{y,c_i}\prod\limits_{y \in \widetilde{O}^c}p(c_j|u_B(y);\Theta^n)}_I.
\end{align*}

Notice that
\begin{align*}
I&=\sum\limits_{c_1=1}^N\cdots\sum\limits_{c_{i-1}=1}^N\sum\limits_{c_{i+1}=1}^N\sum\limits_{c_N=1}^N\sum\limits_{c_i=1}^N\delta_{y,c_i}\prod\limits_{y \in \widetilde{O}^c}p(c_j|u_B(y);\Theta^n)\\
&=\sum\limits_{c_1=1}^N\cdots\sum\limits_{c_{i-1}=1}^N\sum\limits_{c_{i+1}=1}^N\sum\limits_{c_N=1}^N\prod\limits_{y \in \widetilde{O}^c,y\neq x}p(c_j|u_B(y);\Theta^n)p(y|u_B(x);\Theta^n)\\
&=p(y|u_B(x);\Theta^n)\prod\limits_{y \in \widetilde{O}^c,y\neq x}(\sum\limits_{c_j=1}^{N}p(c_j|u_B(y);\Theta^n))\\
&=p(y|u_B(x);\Theta^n).
\end{align*}

According to the Bayesian formula, we have
\begin{align*}
p(y|u_B(x);\Theta^n)=\frac{p(u_B(x),y;\Theta^n)}{p(u_B(x);\Theta^n)}=\frac{p(y;\Theta^n)p(u_B(x)|y;\Theta^n)}{p(u_B(x);\Theta^n)}=\frac{\alpha_yp(u_B(x)|y;\Theta^n)}{\sum\limits_{y \in \widetilde{O}^c}\alpha_yp(u_B(x)|y;\Theta^n)}.
\end{align*}
Therefore,
$$
p(y|u_B(x);\Theta^n)=\frac{p(u_B(x)|y;\Theta^n)}{\sum\limits_{y \in \widetilde{O}^c}p(u_B(x)|y;\Theta^n)}.
$$
Using $p(u_B(x)|y;\Theta^n)=p(u_B(x);u_B(y)^n,h^n)$, we have
$$
w^n(x,y)=\frac{p(u_B(x)|y;\Theta^n)}{\sum\limits_{y \in \widetilde{O}^c}p(u_B(x)|y;\Theta^n)}=\frac{p(u_B(x);u_B(y)^n,h^n)}{\sum\limits_{y \in \widetilde{O}^c}p(u_B(x);u_B(y)^n,h^n)}=\frac{\delta_h(u_B^n(x)-u_B^n(y))}{\sum\limits_{y \in \widetilde{O}^c}\delta_h(u_B^n(x)-u_B^n(y))},
$$
and
\begin{align*}
Q(\Theta;\Theta^n)&=-\sum\limits_{x \in \widetilde{O}^c}\sum\limits_{y \in \widetilde{O}^c}\ln (\alpha_{y}p(u_B(x)|y;\Theta))w^n(x,y)\\
&=-\sum\limits_{x \in \widetilde{O}^c}\sum\limits_{y \in \widetilde{O}^c}\ln (\alpha_{y}p(u_B(x);u_B(y),h))w^n(x,y)\\
&=-\sum\limits_{x \in \widetilde{O}^c}\sum\limits_{y \in \widetilde{O}^c}\ln (\alpha_{y}\delta_h(u_B(x)-u_B(y)))w^n(x,y)\\
&=-\sum\limits_{x \in \widetilde{O}^c}\sum\limits_{y \in \widetilde{O}^c}\ln (\delta_h(u_B(x)-u_B(y)))w^n(x,y)-\sum\limits_{x \in \widetilde{O}^c}\sum\limits_{y \in \widetilde{O}^c}\ln\alpha_{y}w^n(x,y).
\end{align*}
Thus, the M-Step in EM algorithm is
\begin{equation*}
u^{n+1}=\arg\min_{u}\large\left\{-\sum_{x \in \Omega}\sum_{y \in \widetilde{O}^c}\ln(\delta_h(u_B^n(x)-u_B^n(y)))w^{n}(x,y)\large\right\}.
\end{equation*}



\bibliographystyle{elsarticle-num}
\bibliography{bibtex}





\end{document}